\shorttitle{Effects of Limiting Memory Capacity \\ on the Behaviour of Exemplar Dynamics} % insert short title here for use in running head
\newcommand{\R}{\mathbb{R}}
\newcommand{\NN}{\mathbb{N}}
\newcommand{\beq}{\begin{eqnarray*}}
\newcommand{\eeq}{\end{eqnarray*}}
\newcommand{\beqal}{\begin{align*}}
\newcommand{\eeqal}{\end{align*}}
\newcommand{\beqa}{\begin{eqnarray}}
\newcommand{\eeqa}{\end{eqnarray}}
\newcommand{\beqaal}{\begin{align}}
\newcommand{\eeqaal}{\end{align}}
\newcommand{\eps}{\varepsilon}
\newcommand{\expect}{\mathop{\mathbb{E}}}
\newcommand{\pr}{\bold{P}}
\newcommand{\ind}{\mathbbm{1}}
\newcommand{\Var}{\mathrm{Var}}
\newcommand\numberthis{\addtocounter{equation}{1}\tag{\theequation}}
\begin{document}

\title{Effects of Limiting Memory Capacity \\ on the Behaviour of Exemplar Dynamics}
%\thanks{}

% Grants or other notes about the article that should go on the front
% page should be placed within the \thanks{} command in the title
% (and the %-sign in front of \thanks{} should be deleted)
%
% General acknowledgments should be placed at the end of the article.

%\subtitle{Do you have a subtitle?\\ If so, write it here}

%\titlerunning{Short form of title}        % if too long for running head

\authorone[Simon Fraser University]{B Goodman}

\authortwo[Simon Fraser University]{PF Tupper}
\address{Department of Mathematics, Simon Fraser University, 8888 University Dr., Burnaby, BC, V5A 1S6, Canada}

\emailone{bgoodman@sfu.ca}
\emailtwo{pft3@sfu.ca}

%\address{Department of Mathematics, Simon Fraser University, 8888 University Dr., Burnaby, BC, V5A 1S6, Canada}

%\date{Received: date / Accepted: date}
% The correct dates will be entered by the editor

%\maketitle

%\newtheorem{thm}{Theorem}
%\newtheorem{lemma}{Lemma}
%\spnewtheorem{thm}[envcountsect]{Theorem}{\bf}{\it}
%\spnewtheorem{lemma}[envcountsect]{Lemma}{\bf}{\it}

\begin{abstract}
Exemplar models are a popular class of models used to describe language change.  Here we study how limiting the memory capacity of an individual in these models affects the system's behaviour.  In particular we demonstrate the effect this change has on the extinction of categories.  Previous work in exemplar dynamics has not addressed this question.  In order to investigate this, we will inspect a simplified exemplar model.  We will prove for the simplified model that all the sound categories but one will always become extinct, whether memory storage is limited or not.  However, computer simulations show that changing the number of stored memories alters how fast categories become extinct.
\keywords{Exemplar Models; Linguistics; Language Change; Extinction}

\ams{91F20}{70F99}
% \PACS{PACS code1 \and PACS code2 \and more}
%\subclass{91F20 \and 70F99}
\end{abstract}

\section{Introduction}
\label{sec:introduction}
%Exemplar dynamic models can be used to describe the phenomena of language change.  In such models, memories of sounds are catalogued and used to identify new sounds.  In exemplar dynamic models proposed by linguists to model language change, different rules for updating the evolutionary system.  For example, the noise and biases added to the intended produced sound, and how new sounds are categorized.  We want to identify the consequences of different rules.  In particular, this paper focuses on how the evolution of one of these dynamical systems is affected by altering the number of memories stored by an individual.  We do this by investigating a simplified competition model of exemplars.

In spoken and written language, there are instances where there are two or more variants of a word, each of which is equivalent from the point of view of communication.  We can think of instances of the word as belonging to one of two or more categories.  For example, a population might pronounce the word ``either'' as both ``ee-ther'' and ``eye-ther''.  Another example is when there are different spellings of a word.  Figure \ref{fig:cider}, which was generated by Google Ngram Viewer \cite{ngramviewer}, shows in the written lexicon a comparison between the usage of the word ``cider'' and its archaic spelling ``cyder''.  In the year $1800$ ``cyder'' seems to have been the more popular spelling but it has become practically extinct since then.  As we see in this example, it is possible for a category to become extinct, passing out of usage.

\begin{figure}[h!]
\includegraphics[width=14cm]{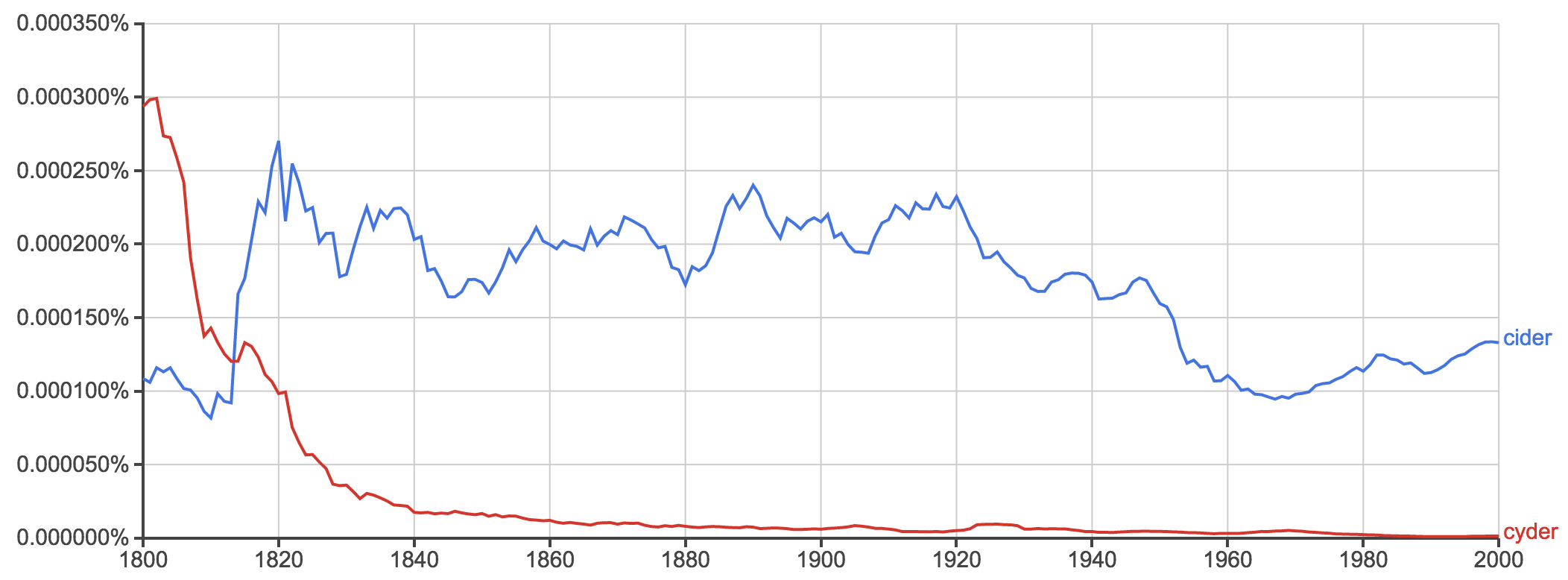}
%\makebox[\textwidth][c]{\includegraphics[width=1.4\textwidth]{weightsim.png}}%
	\caption{Comparison of the usage of ``cider'' (blue) and its archaic spelling ``cyder'' (red) within a corpus of books between the years $1800$ and $2000$.  The $y$-axis represents the percentage of the usages of the words in the entire database.  This image was generated by Google Ngram Viewer \cite{ngramviewer}.}
	\label{fig:cider}
\end{figure}

Here we study a model for just this kind of category extinction.  One popular class of models used to research the evolution of spoken and written language are exemplar models which was first introduced by Nosofsky \cite{nosofsky1986,nosofsky1988}.  Nosofsky hypothesized that people store detailed memories of stimuli they are exposed to which are called exemplars \cite{tupper2015}.  Work done by Johnson \cite{johnson1997} showed exemplar theory could be applied to model speech perception.
%In \cite{johnson1997}, $11$ vowel sounds which were [ei], [o:], [\ae], [e], [i:], [i], [u], [\rotatecharone{e}], [o], [ou], and [u:].

Exemplar theory models language use in one individual.  Exemplars are detailed memories of utterances of sounds, each with its own category label.  Categories are formed of all exemplars with a given category label.  Exemplars are represented as vectors where each dimension represents a phonetic variable such as fundamental frequency or tongue height.  Each exemplar will have a weight (or activation) associated with it, representing how predominant or recent the memory of the sound is.  In many exemplar models these weights decay exponentially over time \cite{pierrehumbert}.
%Each dimension of a vector represents a measurable phonological property of the sound.  Quantities one could measure include, but are not limited to, voice-onset time, tongue height, fundamental frequency of vowels, or the first and second formants (peaks in the harmonics which can be found in spectrograms) \cite{tupper2015}.  According to linguists the first and second formants are particularly important when it comes to identifying vowel sounds \cite{ladefoged2012vowels, tupper2015}.
%\cite{Tilsen,harrington} exemplar dynamic models used to model evolution of vowels?

Exemplar dynamics builds on exemplar theory by creating a production-perception loop between two individuals with their own stored exemplars.  Exemplar dynamics was first used to model speech production and perception by Pierrehumbert in \cite{pierrehumbert}.  Ever since, many linguists have used exemplar dynamics to model spoken and written language such as \cite{jaeger1,wedel2012,tupper2015,bybee2002,winter2016} to list a few.

In exemplar dynamics, there are usually two individuals speaking to one another.  Each individual has a store of labeled exemplars.  At every time step a new sound is produced by a speaker, which is then perceived by the listener and classified based on the listener's stored exemplars.  The way the sound is produced varies depending on the model.  Usually a new sound is produced randomly by adding noise and bias to a pre-existing exemplar.  The listener usually categorizes sounds based on their `closeness' to the cloud of exemplars stored for each category.  The weights of the exemplars decay at each time step, and the process is repeated.  Newly categorized sounds become a part of the perception process, continually evolving the system \cite{pierrehumbert}.

The extinction of a category occurs when the weights for all the exemplars labelled in that category approach zero.  This represents the listener no longer remembering the category.  The listener will cease to produce tokens from that category.  A necessary condition for the extinction of a category is that the probability of classifying a sound as that category must approach zero.  In this paper we will be particularly interested in when there is extinction of all but one category.

This paper is motivated by research in exemplar dynamics done by Tupper \cite{tupper2015} and Wedel \cite{wedel2012}.  They both studied the same exemplar dynamic model, but with a subtle difference.  In \cite{wedel2012} categories were limited to a maximum of $100$ stored exemplars, whereas in \cite{tupper2015} categories had no limitation on the number of stored exemplars.  It was demonstrated in \cite{tupper2015} that when the number of exemplars stored is unlimited, then there is extinction of all but one category.  In \cite{wedel2012}, it was observed that there is no category extinction in simulations for a certain choice of parameters when the exemplars stored per category is limited to $100$.  However, in \cite{wedel2012}, Wedel only did numerical simulations of his model up to $4000$ iterations.
%In Wedel's model, the total weights for each category of sound are approximately equal, because the number of exemplars stays the same for each category.  This makes it more difficult for categories to become extinct since each category has approximately equal chance of getting classified.

This begs the question, when you limit the number of exemplars to be stored per category, will categories eventually become extinct?  In this paper we seek the answer to this question.  The models of \cite{tupper2015} and \cite{wedel2012} are too complicated to investigate rigorously, so we study a simpler model which captures some of their essential features.
%In simulations where we limit the number of exemplars stored per category, extinction can take longer than when we simulate an infinite number of exemplars.  This is because when the number of exemplars stored per category is limited to a small number, the exemplar weights cannot decay fast enough to allow for one category to significantly outweigh another.

%  Category extinction is caused in their model when the total weight of one category is much less than the total weight of another category.
%Their model utilizes Luce choice rule \cite{luce} to classify new sounds.  Our model will also use the Luce choice rule, but it will be simpler than the model studied by Tupper and Wedel.  This simpler model will allow us to study the long term existence of categories.

In Section \ref{sec:general}, we describe our simple exemplar model.  Our model only depends on three parameters: the number of categories $k$, the decay rate $\lambda$, and the number of exemplars stored per category $N$.  Two particular cases of this general model will be studied: one where we limit the number of exemplars ($N<\infty$, as in \cite{wedel2012}) in Section \ref{singlestored}, and another where we do not ($N=\infty$, as in \cite{tupper2015}) in Section~\ref{infstored}.  We prove in both cases that all categories but one will become extinct.  In Section~\ref{simstoptime} we discuss computational results, which demonstrates how limiting the number of exemplars affects the system's evolution.  The numerical simulations in this section will help us explain the effect $N$ and $\lambda$ have on the expected time to extinction.

\section{Simple Exemplar Weight Model}
\label{sec:general}

In this section we describe a simplified exemplar model.  The parameters for the system are the number of categories, $k$, the number of exemplars stored per category, $N$, and the decay rate, $\lambda$.  The listener starts with some exemplars with associated weights in each category, and then receives a stream of new inputs (sounds).  The listener in this model will decide how to classify new sounds only using the total weights of the exemplars in each category.  The phonetic information stored in exemplars will not be utilized in the categorization process.

%Our model is a stochastic process, where a new sound is generated randomly at each time step, and is then classified and stored in memory by the listener.  At time step $n$, the listener must classify the sound they hear as one of $k$ categories of sound using the weights of the exemplars.

%; except in the format $(\cdot)^{n}$
At time $n$, let  $w_{j,m}^{n}$ be the weight of the $m$th exemplar where $m\in \NN$, for category $j$.  At time $n$, these $k$ infinite sequences of real numbers comprise the state of the system.  Note that throughout this paper, superscript $n$ is an index referring to time $n$, and not the exponent $n$.  Let $N$ be the maximum number of exemplars per category the listener is permitted to store.  Let $\lambda >0$ be the decay rate of the weights, so that at each time step $n$, the weights of old memories will decay by a factor of $ \beta =e^{-\lambda }$.  New exemplars are given a weight $W_{0}=1$.  Additionally, when $N<\infty$, if there are $N+1$ exemplars in a category with non-zero weight upon adding a new exemplar, then the exemplar with the lowest weight is discarded.

We assume that exemplars are ordered by weight at all times, so that $0\leq w_{j,m+1}^{n} \leq w_{j,m}^{n} \leq 1$, for all $n\in \NN_{0}$, $j\in \{ 1,\hdots,k \}$, and $m< N$.  The initial conditions of the weights are non-random, and can be anything such that $0\leq  w_{j,m}^{0} \leq 1$, for all $j\in \{ 1,\hdots,k \}$, and $m\leq N$, at least one of the weights in one category must be non-zero, and if $N<\infty$, then $w_{j,m}^{0}=0$ for all $j\in \{ 1,\hdots,k \}$, and $m>N$.

Let $W_{j}^{n}:=\sum_{m=1}^{N}w_{j,m}^{n}$ be the total weight of exemplars in category $j\in \{ 1,\hdots,k \}$, and $W_{\mbox{\textit{tot}}}^{n}:=\sum_{j=1}^{k} W_{j}^{n}$ be the total weight of all exemplars.

%$0\leq w_{j,N}^{0} \leq w_{j,N-1}^{0} \leq \hdots \leq w_{j,1}^{0}\leq 1$, for all $j$.  New exemplars are given a weight $W_{0}=1$.

Let $x_{n}$ be the category we classify the $n$th sound as at time $n$.  For example, $x_{n}=j$ means we classified the $n$th sound as category~$j$.  We let the probability of classifying the $n$th sound as category $j$ ($x_{n}=j$), given the state of the system in the previous time step be $W_{j}^{n}  / W_{\mbox{\textit{tot}}}^{n}$.  This classification procedure is the Luce choice rule \cite{luce}.  As such, the categorization of sounds only depends on the weights of the exemplars, unlike other models where the phonetic information stored in exemplars is used to classify sounds.
%If $N<\infty$, then at each step in the process we set any weights in each sequence beyond the $N$th position to $0$.

To aid in the analysis of our model, we define a filtration that the processes $\{w_{j,m}^n\}_{n \geq 0}$ are adapted to. First, let $\mathcal{F}$ be the $\sigma$-field generated by all random variables in the model.
We then define the sequence  of $\sigma$-fields $\mathcal{F}_{n}$  for $n \geq 0$ by
%of $\sigma$-fields.  More precisely, we let 
\beqa
\mathcal{F}_{n}=\sigma (w_{j,m}^{q},~0<q\leq n,~j\in\{1,\hdots,k\},~m\in \NN). \label{sigmafield}
\eeqa
%so $\mathcal{F}_{n}$ is the smallest $\sigma$-field with respect to which each $w_{j,m}^{q}$, such that ${0<q\leq n}$, 
%for $j\in\{1,\hdots,k\}$, and $m\in \NN$.
% is measurable \cite[pg.64]{billingsley1986probability}. 
 This sequence of $\sigma$-fields forms a filtration since, for all $n$, $\mathcal{F}_{n} \subset \mathcal{F}_{n+1} \subset \mathcal{F}$ \cite[pg.458]{billingsley1995}.

%We will now consider a sequence $\mathcal{F}_{n}$ of $\sigma$-fields.  More precisely, we let 
%\beqa
%\mathcal{F}_{n}=\sigma (w_{j,m}^{q},~0<q\leq n,~j\in\{1,\hdots,k\},~m\in \NN). \label{sigmafield}
%\eeqa
%so $\mathcal{F}_{n}$ is the smallest $\sigma$-field with respect to which each $w_{j,m}^{q}$, such that ${0<q\leq n}$, $j\in\{1,\hdots,k\}$, and $m\in \NN$, is measurable \cite[pg.64]{billingsley1986probability}.  This $\sigma$-field will be important in the next two sections.  One thing to note is that this sequence of $\sigma$-fields forms a filtration $\{ \mathcal{F}_{n} \}_{n\geq 1}$.  A filtration $\{ \mathcal{F}_{n} \}_{n\geq 1}$, is an indexed family of $\sigma$-fields such that $\mathcal{F}_{n}\subseteq \mathcal{F}_{n+1}$, for all $n\geq 1$ \cite[pg.491]{bjork2009arbitrage}.%  This allows us to write more formally that
%\beq
%\bold{P}(x_{n}=j|\mathcal{F}_{n})=  W_{j}^{n}  / W_{\mbox{\textit{tot}}}^{n},
%\eeq
%for each $j\in \{1,\hdots , k\}$.

Another way to describe the process is the following: At time step $n$, ${\bold{P}(x_{n}=j|\mathcal{F}_{n})=  W_{j}^{n}  / W_{\mbox{\textit{tot}}}^{n} }$, for each $j\in \{1,\hdots , k\}$.  If $x_{n}=j$, then:
\begin{enumerate}
\item Let $w_{j,m+1}^{n+1}= \beta w_{j,m}^{n}$, for all $m< N$, and ${w_{j,1}^{n+1}=W_{0}=1}$.  If $N<\infty$, the exemplar corresponding to the $N$th position of category $j$ from the previous time step will be discarded: Thus, if $N< \infty$, we let $w_{j,N+1}^{n+1}=0$.
\item For all $i\neq j$, and $m\in \{1,\hdots, N \}$, let $w_{i,m}^{n+1}= \beta w_{i,m}^{n}$.
\end{enumerate}
%This is a Markov process, since the next step only depends on the step before it.

The next couple of sections are devoted to proving the model just described always results in the extinction of all but one category.  Sections \ref{singlestored} and \ref{infstored} will respectively look at the cases where $N<\infty$ and $N=\infty$.

\section{Finite Stored Exemplars Model}
\label{singlestored}

In this section we will show that when $N<\infty$, all but one category will become extinct with a probability of $1$.  That is, it will be proved that with a probability of $1$, there exists an $M$ and a $j$, such that $x_{n}=j$, for all $n\geq M$.

The following lemma proves if one classifies $p$ consecutive sounds as category $j$, then it only increases the probability of the next sound being classified as category $j$.

\begin{lemma}
If $N<\infty$, and $\mathcal{F}_{n}$ is the $\sigma$-field defined by Equation \ref{sigmafield}, then
\beq
\bold{P}(x_{n+p}=j | x_{n+p-1}=j, \hdots , x_{n}=j , \mathcal{F}_{n})\geq \bold{P}(x_{n}=j | \mathcal{F}_{n}),
\eeq
a.s., for all $p\in \NN_{0}$, and $j\in \{ 1, \hdots , k \}$.
\label{increaselemma}
\end{lemma}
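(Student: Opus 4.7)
The plan is to reduce the claim to a one-step version and then verify it by direct algebraic manipulation of the update rules.

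First I would observe that once we condition on $\mathcal{F}_n$ together with the events $\{x_n=j\}, \{x_{n+1}=j\},\ldots,\{x_{n+p-1}=j\}$, the weight configuration at time $n+p$ is completely determined. This is because the update rules in (1) and (2) are deterministic functions of the previous weights and the choice $x_n$. Consequently, $\bold{P}(x_{n+p}=j\mid x_{n+p-1}=j,\ldots,x_n=j,\mathcal{F}_n)$ equals the (deterministic) value of $W_j^{n+p}/W_{\mbox{\textit{tot}}}^{n+p}$ computed along the trajectory in which $j$ is chosen $p$ times. By induction on $p$, it then suffices to prove the one-step statement: if $x_n=j$, then
\[
\frac{W_j^{n+1}}{W_{\mbox{\textit{tot}}}^{n+1}}\;\geq\;\frac{W_j^n}{W_{\mbox{\textit{tot}}}^n}.
\]

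Next I would use the explicit update rules. When $x_n=j$, the new weights in category $j$ are $w_{j,1}^{n+1}=1$ and $w_{j,m+1}^{n+1}=\beta w_{j,m}^n$ for $m<N$, while the exemplar $w_{j,N}^n$ is discarded. All other categories simply decay by $\beta$. Summing gives
\[
W_j^{n+1}=1+\beta W_j^n-\beta w_{j,N}^n,\qquad W_{\mbox{\textit{tot}}}^{n+1}=1+\beta W_{\mbox{\textit{tot}}}^n-\beta w_{j,N}^n.
\]
(Note that the same correction term $\beta w_{j,N}^n$ appears in both numerator and denominator, which is the key structural fact.) Cross-multiplying $W_j^{n+1}/W_{\mbox{\textit{tot}}}^{n+1}\geq W_j^n/W_{\mbox{\textit{tot}}}^n$ and simplifying reduces the desired inequality to
\[
\bigl(W_{\mbox{\textit{tot}}}^n-W_j^n\bigr)\bigl(1-\beta w_{j,N}^n\bigr)\;\geq\;0,
\]
which holds because $W_{\mbox{\textit{tot}}}^n\geq W_j^n$ and $\beta w_{j,N}^n\leq\beta<1$ (as $\lambda>0$ and $w_{j,N}^n\leq 1$).

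The main thing to be careful about is the bookkeeping at position $N$: one must correctly account for the dropped exemplar $w_{j,N}^n$ and notice that it subtracts the \emph{same} amount from $W_j$ and from $W_{\mbox{\textit{tot}}}$, which is exactly what makes the ratio non-decreasing. Aside from this subtlety, the argument is a short deterministic computation plus a trivial induction, so I expect no serious obstacle.
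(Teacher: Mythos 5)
Your proposal is correct and follows essentially the same route as the paper: both reduce the claim to the one-step monotonicity of $W_j/W_{\mbox{\textit{tot}}}$ along a trajectory where $j$ is chosen repeatedly (the paper phrases this as induction on $p$), and both rest on the same key algebraic fact that the discarded exemplar's contribution $\beta w_{j,N}^n$ is subtracted from numerator and denominator alike. Your cross-multiplication to $(W_{\mbox{\textit{tot}}}^n-W_j^n)(1-\beta w_{j,N}^n)\geq 0$ is just a rearrangement of the paper's observation that adding the common nonnegative quantity $\beta^{-1}(1-\beta w_{j,N}^n)$ to both terms of a fraction at most $1$ cannot decrease it.
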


\begin{proof}

%We will begin by proving that for all $n$, $\bold{P}(x_{n+1}=j | x_{n}=j , \mathcal{F}_{n})\geq \bold{P}(x_{n}=j | \mathcal{F}_{n})$, a.s.  This simpler statement will help us prove the general result using induction.

%Utilizing Equation \ref{simpler} we can use induction to prove the lemma.

We will prove this lemma using induction.  Let $S(p)$ be the statement that
\beq
\bold{P}(x_{n+p}=j | x_{n+p-1}=j, \hdots , x_{n}=j , \mathcal{F}_{n})\geq \bold{P}(x_{n}=j | \mathcal{F}_{n}),
\eeq
a.s.  We want to prove $S(p)$ is true for all $p\in \NN_{0}$.

The initial statement $S(0)$, which is $\bold{P}(x_{n}=j |  \mathcal{F}_{n})\geq \bold{P}(x_{n}=j | \mathcal{F}_{n})$, is true because the two sides are equal.

% $\bold{P}(x_{n+1}=j | x_{n}=j , \mathcal{F}_{n})\geq \bold{P}(x_{n}=j | \mathcal{F}_{n})$.  If $x_{n}=j$, then $W_{j}^{n+1}=\beta W_{j}^{n}+1-\beta w_{j,N}^{n}$, and $W_{ \mbox{\textit{tot}} }^{n+1}=\beta W_{ \mbox{\textit{tot}} }^{n}+1-\beta w_{ j ,N}^{n}$, implying that
%\beq
%\bold{P}(x_{n+1}=j | x_{n}=j , \mathcal{F}_{n})=\dfrac{ W_{j}^{n}+\beta^{-1}(1-\beta w_{j,N}^{n}) }{ W_{ \mbox{\textit{tot}} }^{n}+\beta^{-1} (1-\beta w_{ j ,N}^{n}) },
%\eeq
%a.s.  Note that $W_{ \mbox{\textit{tot}} }^{n}=W_{j}^{n}+\sum_{q\neq j}W_{q}^{n}$.

%Now consider the function $f(x)=(a+x)(a+b+x)^{-1}$, where $a\geq 0$, and $b>0$.  Via a simple calculation, $f'(x)=b(a+b+x)^{-2}>0$, for all $x\geq 0$.  Now if we substitute $a=W_{j}^{n}>0$, and $b=\sum_{q\neq j}W_{q}^{n}>0$, we know $\bold{P}(x_{n}=j | \mathcal{F}_{n})=f(0)$, and $\bold{P}(x_{n+1}=j | x_{n}=j , \mathcal{F}_{n})=f(\beta^{-1} (1-\beta w_{ j ,N}^{n}))$.  Because $\beta^{-1} (1-\beta w_{ j ,N}^{n})>0$, we know that
%\beq
%\bold{P}(x_{n+1}=j | x_{n}=j , \mathcal{F}_{n})\geq \bold{P}(x_{n}=j | \mathcal{F}_{n}),
%\label{simpler}
%\eeq
%a.s., for all $n$.

Now we assume the inductive hypothesis $S(p)$ is true; $\bold{P}(x_{n+p}=j | x_{n+p-1}=j, \hdots , x_{n}=j , \mathcal{F}_{n})\geq \bold{P}(x_{n}=j | \mathcal{F}_{n})$.  We want to show $S(p+1)$ is true.  If $\{ x_{n+p}=j, \hdots , x_{n}=j \}$, then $W_{j}^{n+p+1}=\beta W_{j}^{n+p}+1-\beta w_{j,N}^{n+p}$, and $W_{ \mbox{\textit{tot}} }^{n+p+1}=\beta W_{ \mbox{\textit{tot}} }^{n+p}+1-\beta w_{ j ,N}^{n+p}$.  One can show via simple algebra, using the facts that $\beta^{-1}(1-\beta w_{j,N}^{n+p})>0$, and $W_{j}^{n+p} \leq W_{ \mbox{\textit{tot}} }^{n+p}$, that
\begin{align*}
\bold{P}(x_{n+p+1}=j | x_{n+p}=j, \hdots , x_{n}=j , \mathcal{F}_{n})&=\dfrac{ W_{j}^{n+p}+ \beta^{-1}(1-\beta w_{j,N}^{n+p}) }{ W_{ \mbox{\textit{tot}} }^{n+p}+\beta^{-1}(1-\beta w_{ j ,N}^{n+p}) } \\
&\geq \dfrac{ W_{j}^{n+p}}{ W_{ \mbox{\textit{tot}} }^{n+p} } \numberthis \label{inductioneqn}%\\
%& =\bold{P}(x_{n+p}=j | x_{n+p-1}=j, \hdots , x_{n}=j , \mathcal{F}_{n})\geq \bold{P}(x_{n}=j | \mathcal{F}_{n}),
\end{align*}
a.s.  The right hand side of Equation \ref{inductioneqn} is equal to $\bold{P}(x_{n+p}=j | x_{n+p-1}=j, \hdots , x_{n}=j , \mathcal{F}_{n})$, which implies by the induction hypothesis that statement $S(p+1)$ is true. \end{proof}

Define $A_{n}$ to be the event that we only classify input sounds as a single category from time step $n$ onwards.  More precisely, 
\beqa
A_{n}=\{ \exists j, x_{m}=j, \forall m \geq n  \}.
\label{AnEqn}
\eeqa
The event $A_{n}$ will be important throughout this section.

\begin{lemma}  If $N <\infty$, and $\mathcal{F}_{n}$ is the $\sigma$-field defined by Equation \ref{sigmafield}, then there exists a $Q>0$, such that $\bold{P}(A_{n}|\mathcal{F}_{n})\geq Q$, for all $n$, where $A_{n}$ is the event defined by Equation \ref{AnEqn}.
\label{pastlemma}
\end{lemma}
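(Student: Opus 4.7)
The plan is to show that from any state at time $n$, the probability of classifying every future sound as a single (currently dominant) category is bounded below by a positive constant depending only on $k$, $N$, and $\beta$. Writing $A_n^j := \{x_m = j \text{ for all } m \geq n\}$, these events are disjoint across $j$ and have union $A_n$, so it suffices to fix a particular $j^* = j^*(\omega)$ pointwise and lower-bound $\bold{P}(A_n^{j^*} \mid \mathcal{F}_n)$ uniformly. I would take $j^*$ to be any category maximizing $W_j^n$; by the pigeonhole principle this gives $W_{j^*}^n / W_{\mbox{\textit{tot}}}^n \geq 1/k$.

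Applying the chain rule for conditional probabilities yields
\beq
\bold{P}(A_n^{j^*} \mid \mathcal{F}_n) = \prod_{\ell=0}^{\infty} \bold{P}(x_{n+\ell} = j^* \mid x_{n+\ell-1} = j^*, \hdots, x_n = j^*, \mathcal{F}_n),
\eeq
and I will split this product at $\ell = N$. For $\ell < N$, Lemma \ref{increaselemma} bounds each factor below by $W_{j^*}^n / W_{\mbox{\textit{tot}}}^n \geq 1/k$, so the first $N$ factors contribute at least $(1/k)^N$. For the tail $\ell \geq N$, the key observation is that after $N$ consecutive classifications as $j^*$ the weights of $j^*$ are driven to exactly $1, \beta, \beta^2, \hdots, \beta^{N-1}$ (a fixed point of the classify-as-$j^*$ update), so $W_{j^*}^{n+\ell} = (1-\beta^N)/(1-\beta)$ for all $\ell \geq N$. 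Under the same conditioning, every category $i \neq j^*$ receives no new exemplars and its weights decay purely, giving $W_i^{n+\ell} = \beta^\ell W_i^n \leq \beta^\ell N$. Combining these bounds, each tail factor is at least $1/(1 + D\beta^\ell)$ for the constant $D := (k-1)N(1-\beta)/(1-\beta^N)$, which depends only on the model parameters.

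Since $\sum_\ell \beta^\ell$ converges, so does $\sum_\ell \log(1 + D\beta^\ell)$, and hence the tail product $\prod_{\ell=N}^{\infty}(1 + D\beta^\ell)^{-1}$ converges to a strictly positive value. Multiplying by $(1/k)^N$ yields the desired uniform lower bound $Q$, independent of $n$ and of the state. The main step to verify carefully is the explicit stabilization of $j^*$'s weights after $N$ consecutive classifications as $j^*$ — it is this collapse to the deterministic sum $(1-\beta^N)/(1-\beta)$, paired with the geometric decay of the other categories' weights, that prevents the tail product from degenerating to zero. Lemma \ref{increaselemma} plays the complementary role of controlling the initial, unstructured segment of the product, where the weights have not yet stabilized.
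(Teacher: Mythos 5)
Your proposal is correct and follows essentially the same route as the paper's proof: pigeonhole to find a category with conditional probability at least $1/k$, Lemma \ref{increaselemma} to get the $(1/k)^{N}$ bound on the first $N$ factors, the stabilization of the dominant category's weight at $\Omega=(1-\beta^{N})/(1-\beta)$ combined with geometric decay of the other categories to bound the tail factors by $(1+D\beta^{\ell})^{-1}$, and convergence of the resulting infinite product. The only differences are cosmetic (your constant $D$ uses the crude bound $W_{i}^{n}\leq N$ where the paper uses $W_{i}^{n}\leq\Omega$ for $n>N-1$), so no further changes are needed.
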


\begin{proof}

At time step $n$, there must exist a category $c\in \{ 1, \hdots , k \}$, such that $\bold{P}(x_{n}=c |\mathcal{F}_{n}) \geq k^{-1}$.  By Lemma \ref{increaselemma}, we get the following inequality,
\begin{align*}
\bold{P}( x_{n+N-1}=c,\hdots, x_{n+1}=c, x_{n}=c | \mathcal{F}_{n}) \\
=\prod_{p=0}^{N-1} \bold{P}( x_{n+p}=c | x_{n+p-1}=c,\hdots , x_{n}=c , \mathcal{F}_{n}) \geq k^{-N}.\numberthis \label{kineq}
\end{align*}
If $x_{q}=c$, for $n\leq q \leq n+N-1$, then $W_{c}^{n+N}=\sum_{q=0}^{N-1}\beta^{q}$, and if we continue to categorize $x_{q}=c$, for $q>n+N-1$, the weight for category $c$ will stay constant.  Let $\Omega=\sum_{q=0}^{N-1}\beta^{q}$.  Upon inspection, it is apparent that $W_{i}^{n}\leq \Omega$, for all $i$ and $n >N-1$, because it is the maximum total weight a category can have after there have been at least $N$ time steps.

If $x_{p}=c$, for $n\leq p \leq n+q$, where $q\geq N-1$, then
\begin{align*}
 W_{\mbox{\textit{tot}}}^{n+q}&=\sum_{p=1}^{k}W_{p}^{n+q} =W_{c}^{n+q}+\sum_{p\neq c}W_{p}^{n+q} \leq \Omega+ \sum_{p\neq c}\beta^{q} \Omega < \Omega(1+k \beta^{q}) .
\end{align*}

Let $G_{n,N}=\{ x_{n+N-1}=c , \hdots , x_{n+1}=c, x_{n}=c \}$.  The probability of categorizing the next sound as $c$, given that we have only categorized as $c$ since time step $n$, and have at least done so $N$ times in a row can be bounded below, 
\begin{align*}
\bold{P}(x_{n+N-1+q}=c | x_{n+N-2+q}=c,\hdots, x_{n+N}=c, G_{n,N} , \mathcal{F}_{n}) &= \dfrac{\Omega}{W_{\mbox{\textit{tot}}}^{n+N-1+q}} \\
&\geq \dfrac{\Omega}{\Omega ( 1+k \beta^{q} ) } \\
&= 1-\dfrac{k \beta^{q} }{ 1+k \beta^{q} },  \numberthis \label{multineq}
\end{align*}
for all $n,q,N>0$.  Note we used the fact that $W_{j}^{n+N-1+q}=\Omega$, because the event $G_{n,N}$ had already occurred.

Utilizing Equations \ref{kineq} and \ref{multineq},
{\allowdisplaybreaks
\begin{align*}
\bold{P}( x_{m}=c ,\forall m\geq n | \mathcal{F}_{n}) = \bold{P}\left( \bigcap_{q=0}^{\infty} \left\{ x_{n+q}=c \right\} \bigg|\mathcal{F}_{n} \right) 
\\
=\bold{P}\left(\bigcap_{q=1}^{\infty} \left\{ x_{n+N-1+q}=c \right\} \bigg| G_{n,N} , \mathcal{F}_{n} \right) \bold{P}(G_{n,N} | \mathcal{F}_{n} )    
 \\
\geq k^{-N} \prod_{q=1}^{\infty}\bold{P}(x_{n+N-1+q}=c | x_{n+N-2+q}=c,\hdots, x_{n+N}=c,G_{n,N} ,\mathcal{F}_{n}) 
\\
\geq  k^{-N} \prod_{q=1}^{\infty} \left( 1-\dfrac{k \beta^{q} }{ 1+k \beta^{q}} \right) . \numberthis \label{eq2}
\end{align*}
}
By Theorem $15.5$ in \cite{rudinanalysis}, the product in Equation \ref{eq2} is strictly greater than $0$ if and only if
\beq
\sum_{q=1}^{\infty}\dfrac{k \beta^{q} }{ 1+k \beta^{q} }<\infty.
\eeq
By the ratio test we know this series is convergent.  Therefore, there is a $Q>0$, such that $\bold{P}( x_{m}=c ,\forall m\geq n|\mathcal{F}_{n})\geq Q >0$.

Since
{\allowdisplaybreaks
\begin{align*}
\bold{P} \left( \exists j, x_{m}=j, \forall m \geq n  | \mathcal{F}_{n} \right) &\geq \bold{P} \left(x_{m}=c, \forall m\geq n | \mathcal{F}_{n}\right)  \geq Q>0,
\end{align*}}
we get the final result. \end{proof}

%Noting that the events $\{ x_{m}=j , \forall m\geq n \}_{j=1}^{k}$ are disjoint, we get the final result
%{\allowdisplaybreaks
%\begin{align*}
%\bold{P} \brac{ \exists j, x_{m}=j, \forall m \geq n  | \mathcal{F}_{n} } &=\bold{P} \brac{ \bigcup_{j=1}^{k} \{ x_{m}=j , \forall m\geq n \} \bigg| \mathcal{F}_{n}} \\
%&= \sum_{j=1}^{k} \bold{P} \brac{x_{m}=j , \forall m\geq n | \mathcal{F}_{n}}  \\
%&\geq \bold{P} \brac{x_{m}=c, \forall m\geq n | \mathcal{F}_{n}}  \geq Q>0.
%\end{align*}} \end{proof}

Lemma \ref{pastlemma} states that the probability of $A_{n}$ (Equation \ref{AnEqn}) occurring, given any event which only depends on the events up to time step $n-1$, can be bounded below by a constant $Q>0$.  In other words, the probability of $x_{m}$ being classified as the same category for all $m\geq n$, always has at least a certain probability of happening no matter what occurs before it.

Note, if $A_{n}$ is true for any value of $n$, the rest of the categories $i\neq j$ will become extinct.  If we prove that $\bold{P}\left( \bigcup_{n=1}^{\infty}A_{n} \right)=1$, then we have proved there is almost surely extinction of all but one category when $N<\infty$.

\begin{lemma}
Let $\mathcal{G}$ be a $\sigma$-field, $G\in \mathcal{G}$, such that $\bold{P}(G)>0$, and $X$ be an event,.  If there exists a $Q>0$, s.t. $\bold{P}(X|\mathcal{G})\geq Q$, a.s., then $\bold{P}(X|G)\geq Q$.
\label{sigmalemma}
\end{lemma}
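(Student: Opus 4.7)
The plan is to unpack the elementary (event-based) conditional probability $\bold{P}(X|G) = \bold{P}(X \cap G)/\bold{P}(G)$ and then bound the numerator by using the defining property of the conditional probability $\bold{P}(X|\mathcal{G})$ with respect to the $\sigma$-field $\mathcal{G}$.

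First I would recall that $\bold{P}(X|\mathcal{G})$ is the $\mathcal{G}$-measurable random variable characterized (up to a null set) by the integral identity
\beq
\int_H \bold{P}(X|\mathcal{G}) \, d\bold{P} = \bold{P}(X \cap H) \quad \text{for every } H \in \mathcal{G}.
\eeq
Since $G \in \mathcal{G}$, I can apply this with $H = G$ to obtain $\bold{P}(X \cap G) = \int_G \bold{P}(X|\mathcal{G}) \, d\bold{P}$.

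Next, I would use the a.s.\ lower bound $\bold{P}(X|\mathcal{G}) \geq Q$ together with monotonicity of the integral to conclude
\beq
\bold{P}(X \cap G) = \int_G \bold{P}(X|\mathcal{G}) \, d\bold{P} \geq \int_G Q \, d\bold{P} = Q \, \bold{P}(G).
\eeq
Dividing by the positive quantity $\bold{P}(G)$ yields $\bold{P}(X|G) = \bold{P}(X \cap G)/\bold{P}(G) \geq Q$, which is the claim.

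There is no real obstacle here; the only subtlety is that ``$\bold{P}(X|\mathcal{G}) \geq Q$ a.s.'' means the inequality may fail on a $\bold{P}$-null set, but this null set contributes $0$ to the integral over $G$ and so does not affect the bound. This is exactly why the hypothesis $\bold{P}(G) > 0$ is used only to legitimize the division at the end.
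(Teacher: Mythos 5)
Your proof is correct and is essentially identical to the paper's: both write $\bold{P}(X\cap G)=\expect[\bold{P}(X|\mathcal{G})\ind_{G}]$ (your $\int_G \bold{P}(X|\mathcal{G})\,d\bold{P}$) via the defining property of conditional probability given a $\sigma$-field, bound it below by $Q\,\bold{P}(G)$, and divide by $\bold{P}(G)>0$. No gaps; your remark about the null set is a fine clarification of the same argument.
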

\begin{proof}
By the definition of the probability of an event conditioned on a {$\sigma$-field} \cite[p.155]{rosenthal},
\begin{align*}
\bold{P}(X|G) &= \dfrac{\bold{P}(X\cap G)}{\bold{P}(G)} =\dfrac{\expect\left[\bold{P}(X|\mathcal{G})\ind_{G} \right]}{\bold{P}(G)} \geq \dfrac{\expect\left[Q \ind_{G} \right]}{\bold{P}(G)} \geq Q.
\end{align*}
\end{proof}

\begin{thm}
When $N<\infty$, all categories but one will become extinct with a probability of $1$: that is, $\bold{P}\left(\bigcup_{n=1}^{\infty}A_{n}\right)=1$, where $A_{n}$ is given by Equation \ref{AnEqn}.
\label{collapsethm}
\end{thm}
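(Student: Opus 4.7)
The plan is to prove $\bold{P}(A)=1$ for $A:=\bigcup_{n=1}^{\infty}A_{n}$ by combining Lemma \ref{pastlemma} with L\'evy's upward convergence theorem. The two key observations are that the events $A_{n}$ are nested increasingly and that Lemma \ref{pastlemma} provides a uniform positive lower bound $Q$ on the conditional probabilities $\bold{P}(A_{n}\mid\mathcal{F}_{n})$.

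First I would verify that $A_{n}\subseteq A_{n+1}$: if from time $n$ onwards every classification equals some single category $j$, then a fortiori from time $n+1$ onwards every classification equals $j$. Hence $A$ is the increasing limit of the $A_{n}$ and lies in $\mathcal{F}_{\infty}:=\sigma\!\left(\bigcup_{n}\mathcal{F}_{n}\right)$. Since $A\supseteq A_{n}$, Lemma \ref{pastlemma} yields $\bold{P}(A\mid\mathcal{F}_{n})\geq \bold{P}(A_{n}\mid\mathcal{F}_{n})\geq Q$ almost surely for every $n$.

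Next I would invoke L\'evy's upward convergence theorem: the Doob martingale $M_{n}:=\bold{P}(A\mid\mathcal{F}_{n})$ converges almost surely to $\ind_{A}$ because $A\in\mathcal{F}_{\infty}$. Passing to the a.s.\ limit in the bound $M_{n}\geq Q$ gives $\ind_{A}\geq Q$ a.s. Since $\ind_{A}$ is $\{0,1\}$-valued and $Q>0$, this forces $\ind_{A}=1$ a.s., so $\bold{P}(A)=1$.

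The main obstacle is that each $A_{n}$ is a tail-type event depending on the full infinite future, which blocks a direct Borel--Cantelli-style bootstrap using Lemma \ref{sigmalemma} over disjoint time windows. Any finite-time proxy, such as ``$T$ consecutive identical classifications in the $j$th window of length $T$'', would only guarantee a.s.\ arbitrarily long runs and not convergence forever, because the probabilities that a single window succeeds do not add up to a witness for the infinite tail event. The essential step is therefore to exploit the nesting $A_{n}\subseteq A_{n+1}$ to place $A$ inside $\mathcal{F}_{\infty}$ and then use the martingale convergence theorem to promote the uniform bound ``$\geq Q$'' into ``$=1$''.
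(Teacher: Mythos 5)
Your proof is correct, but it takes a genuinely different route from the paper's. The paper invokes ``Murphy's Law'' from Steel (2015): for nested events $A_{n}\subseteq A_{n+1}$ with union $A$, if $\bold{P}(A\mid A_{n}^{c})\geq\eps>0$ for all $n$ then $\bold{P}(A)=1$. Verifying that hypothesis is the bulk of the paper's argument: it decomposes $A_{n}^{c}$ over the first defection time $Y_{n}=\min\{m:x_{n+m}\neq x_{n}\}$, applies Lemma \ref{pastlemma} on each slice $\{Y_{n}=m\}$, and needs the auxiliary Lemma \ref{sigmalemma} to convert the $\sigma$-field-conditioned bound into an event-conditioned one. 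You instead apply L\'evy's upward convergence theorem directly to the Doob martingale $M_{n}=\bold{P}(A\mid\mathcal{F}_{n})$: since $x_{m}$ is $\mathcal{F}_{m+1}$-measurable, $A\in\mathcal{F}_{\infty}$, so $M_{n}\to\ind_{A}$ a.s., and the monotonicity bound $M_{n}\geq\bold{P}(A_{n}\mid\mathcal{F}_{n})\geq Q$ from Lemma \ref{pastlemma} forces $\ind_{A}\geq Q>0$, hence $\ind_{A}=1$ a.s. This bypasses Lemma \ref{sigmalemma} and the $Y_{n}$ decomposition entirely, and is arguably the cleaner argument --- it is essentially the L\'evy zero--one-law mechanism that underlies the Steel result the paper cites as a black box. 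What the paper's approach buys is that it stays at the level of elementary conditional probabilities of events and outsources the martingale machinery to the cited reference; what yours buys is a shorter, self-contained proof given standard martingale convergence. One small point worth making explicit in your write-up: justify that each $A_{n}$ (and hence $A$) lies in $\mathcal{F}_{\infty}$, which holds because the classification $x_{m}$ is recoverable from the weights at time $m+1$ (exactly one category receives a fresh weight-$1$ exemplar).
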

\begin{proof}
The proof utilizes Murphy's Law, a general statement proven in \cite{steel2015}.  Murphy's Law states the following:  Let $(G_{n},n\geq 1)$ be any sequence of events satisfying the condition $G_{n}\subseteq G_{n+1}$, for all $n\geq 1$, and let $G=\bigcup_{n=1}^{\infty}G_{n}$.  If $\bold{P}(G|G_{n}^{c})\geq \eps>0$, for all $n\geq 1$, then $\bold{P}(G)=1$.

We know $A_{n}\subseteq A_{n+1}$, for all $n\geq 1$.  Let $A=\bigcup_{n=1}^{\infty}A_{n}$.  By Murphy's law, if we can show $\bold{P}(A|A_{n}^{c})\geq \eps >0$, for all $n$, then $\bold{P}(A)=1$, proving the theorem.

Let $Y_{n}= \min \{m\in \NN : x_{n+m}\neq x_{n} \} $, and if it is not defined then let $Y_{n}=\infty$.  The event $\{Y_{n}=m \}$ is a subset of $A_{n}^{c}=\{  \exists j>n, \mbox{ s.t. } x_{j}\neq x_{n} \}$, for all $n$, and $A_{n}^{c}=\bigcup_{m>0}\{ Y_{n}=m\}$.  Using the fact that the events $\{ Y_{n}=i\}$ and $\{ Y_{n}=j \}$ are disjoint when $i\neq j$, we obtain the following,
{\allowdisplaybreaks
\begin{align*}
\bold{P}(A|A_{n}^{c})&=\dfrac{\bold{P}(A\cap A_{n}^{c})}{\bold{P}( A_{n}^{c} ) } =\dfrac{\bold{P}\left( A\cap \left(\bigcup_{m>0} \{Y_{n}=m \} \right) \right) }{\bold{P}( A_{n}^{c})} \\
&= \dfrac{ \bold{P}\left( \bigcup_{m>0}\{ A \cap \{Y_{n}=m \} \} \right)}{\bold{P}(A_{n}^{c} )} \\
&=\dfrac{ \sum_{m>0}\bold{P}\left(  A \cap \{Y_{n}=m \}   \right) }{\bold{P}( A_{n}^{c})} \\
%&= \dfrac{ \sum_{m>0}\bold{P}\left(  \brac{\bigcup_{p=1}^{\infty}A_{p}} \cap \{Y_{n}=m \}  \right) }{\bold{P}( A_{n}^{c})} \\
&\geq  \dfrac{ \sum_{m>0}\bold{P}\left(  A_{n+m+1} \cap \{Y_{n}=m \}  \right) }{\bold{P}( A_{n}^{c})}
\end{align*}
since $A_{n+m+1}\subseteq A$.  Using Lemma \ref{pastlemma} with Lemma \ref{sigmalemma} (noting $\{Y_{n}=m \}$ is in $\mathcal{F}_{n+m}$), and that $\bigcup_{m>0}\{ Y_{n}=m\}=A_{n}^{c}$, we obtain
\begin{align*}
\dfrac{ \sum_{m>0}\bold{P}\left(  A_{n+m+1} \cap \{Y_{n}=m \}   \right) }{\bold{P}( A_{n}^{c})}&=   \sum_{m>0}\bold{P}\left(  A_{n+m+1} | \{Y_{n}=m \}  \right) \dfrac{  \bold{P}\left( Y_{n}=m   \right)}{\bold{P}( A_{n}^{c})} \\
&\geq  Q \sum_{m>0}  \bold{P}\left( Y_{n}=m  | A_{n}^{c}\right) \\
&=  Q \cdot  \bold{P}\left( \bigcup_{m>0}\{ Y_{n}=m\}  \bigg| A_{n}^{c}\right)=Q>0.
\end{align*}  } \end{proof}

\section{Infinite Stored Exemplars Weight Model}
\label{infstored}

This section will be devoted to studying the special case of the model where the listener stores an infinite number of exemplars, so $N=\infty$.  The proof for showing there is almost surely extinction of all but one category in this special case will be different from the previous section.

Let $Z_{j}^{n}=\bold{P}(x_{n}=j | \mathcal{F}_{n})=W_{j}^{n} / W_{\mbox{\textit{tot}}}^{n}$, where $\mathcal{F}_{n}$ is as defined in Equation \ref{sigmafield}.  Note the combined weights of all categories which are not $j$ is equal to ${W_{\mbox{\textit{tot}}}^{n}-W_{j}^{n}}$.  We will first re-describe the model's evolutionary process in terms of $W_{j}^{n}$ and $W_{\mbox{\textit{tot}}}^{n}$, in order to simplify the proof.  The evolutionary process evolves as follows:
\begin{itemize}
\item If $x_{n}=j$, then the total weight of category $j$ becomes ${W_{j}^{n+1}=1+W_{j}^{n} \beta}$, and the total weight of all other categories besides $j$ becomes \\ ${W_{\mbox{\textit{tot}}}^{n+1}-W_{j}^{n+1}=(W_{\mbox{\textit{tot}}}^{n}-W_{j}^{n}) \beta}$.
\item If $x_{n}\neq j$, then the total weight of all categories besides $j$ is \\ ${W_{\mbox{\textit{tot}}}^{n+1}-W_{j}^{n+1}=1+(W_{\mbox{\textit{tot}}}^{n}-W_{j}^{n}) \beta}$, and the total weight of category~$j$ is $W_{j}^{n+1}=W_{j}^{n} \beta$.
\end{itemize}

%Eventually we will prove that if $Z_{j}^{n}\rightarrow 0$, a.s., as $n\rightarrow \infty$, then $W_{j}^{n} \rightarrow 0$, which implies that category $j$ becomes extinct.  
We want to prove there exists a category $j$, such that $Z_{j}^{n}\rightarrow 1$, a.s., as $n\rightarrow \infty$, and for the rest of the categories $q\neq j$, that $Z_{q}^{n}\rightarrow 0$, a.s.

We want to prove that for all $j$, that $Z_{j}^{n}$ can only converge to $0$ or $1$, a.s.  We will then show that if $Z_{j}^{n}\rightarrow 0$, a.s., then $W_{j}^{n} \rightarrow 0$, a.s.  As such we would prove all categories but one become extinct.  In order to prove this result, we require a few lemmas.

\begin{lemma} Let $Z_{j}^{n}=W_{j}^{n} / W_{\mbox{\textit{tot}}}^{n}$.  If the number of exemplars per category stored is $N=\infty$, then the random variable $Z_{j}^{n}$ is a martingale with respect to the filtration $\{\mathcal{F}_{n} \}_{n\geq 1}$.
\label{martininf}
\end{lemma}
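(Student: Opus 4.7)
The plan is to verify the three defining properties of a martingale: adaptedness, integrability, and the identity $\expect[Z_{j}^{n+1}\mid \mathcal{F}_{n}] = Z_{j}^{n}$ almost surely. Adaptedness is immediate, since $Z_{j}^{n}=W_{j}^{n}/W_{\mbox{\textit{tot}}}^{n}$ is a deterministic function of the weights $\{w_{j,m}^{q}:q\leq n\}$ that generate $\mathcal{F}_{n}$. Integrability is trivial because $0\leq Z_{j}^{n}\leq 1$ by construction, hence $Z_{j}^{n}$ is bounded.

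The main step is the conditional mean computation, and I would carry it out using the two-case evolutionary description given immediately before the lemma. Conditioned on $\mathcal{F}_{n}$, the event $\{x_{n}=j\}$ has probability $Z_{j}^{n}$; on this event $W_{j}^{n+1}=1+\beta W_{j}^{n}$ and $W_{\mbox{\textit{tot}}}^{n+1}=1+\beta W_{\mbox{\textit{tot}}}^{n}$. On the complementary event $\{x_{n}\neq j\}$, which has probability $1-Z_{j}^{n}$, we have $W_{j}^{n+1}=\beta W_{j}^{n}$ while $W_{\mbox{\textit{tot}}}^{n+1}=1+\beta W_{\mbox{\textit{tot}}}^{n}$. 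Both ratios share the common denominator $1+\beta W_{\mbox{\textit{tot}}}^{n}$, which is $\mathcal{F}_{n}$-measurable, so after taking the conditional expectation the cross-terms in $\beta (W_{j}^{n})^{2}$ coming from the two cases cancel, and the numerator factors as $W_{j}^{n}(1+\beta W_{\mbox{\textit{tot}}}^{n})/W_{\mbox{\textit{tot}}}^{n}$. Dividing through gives $Z_{j}^{n}$.

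I do not expect a serious obstacle; the result is essentially a one-line algebraic identity once the cases are written down. The only subtlety worth flagging is that the clean martingale property depends crucially on the $N=\infty$ hypothesis. When $N=\infty$ no exemplar is ever discarded, so $W_{j}^{n+1}=1+\beta W_{j}^{n}$ on $\{x_{n}=j\}$; in the finite case of Section~\ref{singlestored}, this would be replaced by $W_{j}^{n+1}=1+\beta W_{j}^{n}-\beta w_{j,N}^{n}$, and the cancellation that makes $Z_{j}^{n}$ a martingale would fail, consistent with Lemma~\ref{increaselemma} which exhibits a strict upward bias in that setting.
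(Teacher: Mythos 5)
Your proposal is correct and follows essentially the same route as the paper: condition on $\{x_{n}=j\}$ versus $\{x_{n}\neq j\}$ with probabilities $Z_{j}^{n}$ and $1-Z_{j}^{n}$, substitute the two-case update rules, and observe that the $\beta (W_{j}^{n})^{2}$ terms cancel to leave $Z_{j}^{n}$. The closing remark about why the cancellation fails for finite $N$ is a correct and worthwhile observation, consistent with the upward bias exhibited in Lemma~\ref{increaselemma}.
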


\begin{proof}
We know that $Z_{j}^{n}$ is $\mathcal{F}_{n}$-measurable \cite[p.68]{billingsley1995}, and $\expect (|Z_{j}^{n}|)\leq 1$.  Due to the fact that $Z_{j}^{n+1}$ conditioned on $\mathcal{F}_{n}$ only depends on the values of $W_{j}^{n}$, and $W_{\mbox{\textit{tot}}}^{n}$, we obtain:
\begin{align*}
\expect(Z_{j}^{n+1}|\mathcal{F}_{n})&=\expect(Z_{j}^{n+1}|W_{j}^{n},W_{\mbox{\textit{tot}}}^{n}) \\
&=\dfrac{W_{j}^{n}}{W_{\mbox{\textit{tot}}}^{n}}\left(\dfrac{W_{j}^{n} \beta  +1 }{ W_{\mbox{\textit{tot}}}^{n}  \beta  +1 }\right) 
+\dfrac{W_{\mbox{\textit{tot}}}^{n}-W_{j}^{n}}{W_{\mbox{\textit{tot}}}^{n}}\left(\dfrac{W_{j}^{n} \beta   }{W_{\mbox{\textit{tot}}}^{n} \beta  +1 }\right) \\
&=\dfrac{W_{j}^{n}}{W_{\mbox{\textit{tot}}}^{n}}=Z_{j}^{n},
\end{align*}
implying that $Z_{j}^{n}$ is a martingale with respect to the filtration $\{\mathcal{F}_{n} \}_{n\geq 1}$ \cite[p.458]{billingsley1995}. \end{proof}

%\begin{lemma}
%If a random variable $X_{n}\rightarrow 0$, a.s. as $n\rightarrow \infty$, and $\max |X_{n}| \leq K$, then $\expect ( |X_{n}|^{2} )\rightarrow 0$, as $n\rightarrow \infty$.
%\label{thm2lemma}
%\end{lemma}

%\begin{proof}
%By the definition, we know if $X_{n}\rightarrow 0$ a.s., then $\bold{P}(|X_{n}|>\eps)\rightarrow 0$, as $n\rightarrow \infty$, for all $\eps>0$ \cite{rosenthal}.  This implies, where $\ind$ is the indicator function, that
%\begin{align*}
%\expect(|X_{n}|^{2})&=\expect(|X_{n}|^{2} \ind_{|X_{n}|>\eps})+\expect(|X_{n}|^{2} \ind_{|X_{n}|\leq \eps})  \\
%& \leq \expect (K^{2} \ind_{|X_{n}|>\eps})+\expect (\eps^{2} \ind_{|X_{n}|\leq \eps}) \\
%& =K^{2} \pr(|X_{n}|>\eps)+\eps^{2}\pr(|X_{n}|\leq\eps).
%\end{align*}
%Taking the limit as $n\rightarrow \infty$ on both sides, we determine $\lim_{n\rightarrow\infty}\expect(|X_{n}|^{2})\leq \eps^{2}$, for all $\eps>0$, implying that $\lim_{n\rightarrow\infty}\expect(|X_{n}|^{2})=0 $. \end{proof}

\begin{lemma} There exists a $\gamma \in \R$ depending only on $\lambda$ and the initial total weight $W_{\mbox{\textit{tot}}}^{0}$, such that $W_{\mbox{\textit{tot}}}^{n}\leq \gamma$, for $i=1,2,\dots , k$, and for all $n\geq 0$.
\label{lem:weightbound}
\end{lemma}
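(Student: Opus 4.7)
The plan is to observe that the total weight satisfies a simple linear recurrence and then take $\gamma$ to be the obvious fixed point of that recurrence (plus an allowance for the initial condition). The key observation is that at every time step exactly one new exemplar of weight $W_0 = 1$ is added to the system, while every other weight is multiplied by $\beta = e^{-\lambda}$. Since we are in the $N = \infty$ setting of Section \ref{infstored}, no exemplars are discarded, so summing over all categories gives the clean identity
\begin{equation*}
W_{\mbox{\textit{tot}}}^{n+1} = \beta\, W_{\mbox{\textit{tot}}}^{n} + 1.
\end{equation*}
(If one wanted a version that also covered the $N<\infty$ case, discarding an exemplar can only decrease the total weight, so one would have the inequality $W_{\mbox{\textit{tot}}}^{n+1} \leq \beta\, W_{\mbox{\textit{tot}}}^{n} + 1$.)

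Given this recurrence, I would solve it explicitly by induction on $n$ to obtain
\begin{equation*}
W_{\mbox{\textit{tot}}}^{n} = \beta^{n}\, W_{\mbox{\textit{tot}}}^{0} + \frac{1-\beta^{n}}{1-\beta},
\end{equation*}
and then note that since $0 < \beta < 1$, this expression is bounded above by $W_{\mbox{\textit{tot}}}^{0} + (1-\beta)^{-1}$ uniformly in $n$. One can therefore set
\begin{equation*}
\gamma := W_{\mbox{\textit{tot}}}^{0} + \frac{1}{1-e^{-\lambda}},
\end{equation*}
which depends only on $\lambda$ and $W_{\mbox{\textit{tot}}}^{0}$, as required. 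Since $W_{j}^{n} \leq W_{\mbox{\textit{tot}}}^{n}$ for every $j$, the same bound applies to the individual category weights if that is what the statement means by the ``$i = 1,\ldots,k$'' quantifier.

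There is no real obstacle here: the argument is essentially a one-line geometric-series calculation, and the only thing to be careful about is justifying the recurrence step (checking that in both cases $x_{n}=j$ and $x_{n}\neq j$ from the itemized description at the start of Section \ref{infstored}, the total weight picks up exactly one new unit and scales the old total by $\beta$). After that, the induction and the passage to a uniform bound are routine.
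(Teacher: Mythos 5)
Your proof is correct and takes essentially the same approach as the paper: both rest on the recurrence $W_{\mbox{\textit{tot}}}^{n+1} = e^{-\lambda} W_{\mbox{\textit{tot}}}^{n} + 1$, the paper concluding via monotone convergence to the fixed point that $\gamma = \max\{W_{\mbox{\textit{tot}}}^{0}, (1-e^{-\lambda})^{-1}\}$ works, while you solve the recurrence explicitly and take $\gamma$ to be the (slightly larger but equally valid) sum $W_{\mbox{\textit{tot}}}^{0} + (1-e^{-\lambda})^{-1}$.
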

\begin{proof}
We know that $W_{\mbox{\textit{tot}}}^{n}=\sum_{i=1}^{k}=W_{\mbox{\textit{tot}}}^{n-1}e^{-\lambda}+1$, for all realizations.  Since $e^{-\lambda} <1$, we know that $W_{\mbox{\textit{tot}}}^{n}$ converges to $W:= (1-e^{-\lambda})^{-1}$.
% We know
%\begin{eqnarray}
%\lim_{n\rightarrow \infty} W^{n}=\dfrac{1}{1-e^{-\lambda}}=:W,
%\label{weighteqn}
%\end{eqnarray}
Since $W_{\mbox{\textit{tot}}}^{n}$ converges monotonically to W,
\beq
W_{\mbox{\textit{tot}}}^{n}\leq \max \left\{ W_{\mbox{\textit{tot}}}^{0}, \dfrac{1}{1-e^{-\lambda}} \right\}=\gamma,
\eeq
for all $n$.
This in turn implies the result.
% $w_{i}^{n}\leq \gamma$, for $i=1,2,\dots , k$ and for all $n\geq 0$.
\end{proof}

Using Lemmas~\ref{martininf}~and~\ref{lem:weightbound}, and the Martingale Convergence Theorem \cite[p.468]{billingsley1995}, we are able to prove Theorem~\ref{infcollapse}.

\begin{thm} If $N=\infty$, then for all $j\in \{1,\hdots ,k \}$, $Z_{j}^{n}$ converges a.s. to a random variable $Z_{j}^{*}$, a.s.  Furthermore, the only values $Z_{j}$ can be with positive probability are $0$ and $1$.
\label{infcollapse}
\end{thm}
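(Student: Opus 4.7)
The plan is to handle the two claims of the theorem in sequence. For the convergence of $Z_j^n$, I would invoke the Martingale Convergence Theorem directly: Lemma \ref{martininf} gives the martingale property, and $Z_j^n \in [0,1]$ makes it $L^1$-bounded, so $Z_j^n \to Z_j^*$ almost surely for some $[0,1]$-valued random variable $Z_j^*$. For the dichotomy, my strategy is to compute the conditional variance $E[(Z_j^{n+1} - Z_j^n)^2 \mid \mathcal{F}_n]$ explicitly, bound it below using Lemma \ref{lem:weightbound}, and then exploit $L^2$-boundedness of the martingale to conclude $\sum_n Z_j^n(1-Z_j^n) < \infty$ almost surely, which forces $Z_j^* \in \{0,1\}$.

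For the key computation, the update rules stated at the start of Section \ref{infstored} imply that, conditional on $\mathcal{F}_n$, $Z_j^{n+1}$ equals $(W_j^n\beta+1)/(W_{\mbox{\textit{tot}}}^n\beta+1)$ with probability $Z_j^n$ and equals $W_j^n\beta/(W_{\mbox{\textit{tot}}}^n\beta+1)$ with probability $1-Z_j^n$. Subtracting $Z_j^n = W_j^n/W_{\mbox{\textit{tot}}}^n$ yields increments $(1-Z_j^n)/(W_{\mbox{\textit{tot}}}^n\beta+1)$ and $-Z_j^n/(W_{\mbox{\textit{tot}}}^n\beta+1)$ respectively (which incidentally re-verifies the martingale property), and a short algebraic simplification gives
\beqal
E\bigl[(Z_j^{n+1} - Z_j^n)^2 \mid \mathcal{F}_n\bigr] &= \dfrac{Z_j^n(1-Z_j^n)}{(W_{\mbox{\textit{tot}}}^n\beta + 1)^2} \\
&\geq \dfrac{Z_j^n(1-Z_j^n)}{(\gamma\beta + 1)^2},
\eeqal
where the inequality uses the deterministic bound $W_{\mbox{\textit{tot}}}^n \leq \gamma$ from Lemma \ref{lem:weightbound}.

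To close the argument, I would note that $Z_j^n \in [0,1]$ makes the martingale $L^2$-bounded, so by orthogonality of martingale differences $\sum_{n=0}^\infty E[(Z_j^{n+1} - Z_j^n)^2] = \lim_n E[(Z_j^n)^2] - E[(Z_j^0)^2] \leq 1 < \infty$. Applying Tonelli to the nonnegative summands, the random variable $\sum_n E[(Z_j^{n+1} - Z_j^n)^2 \mid \mathcal{F}_n]$ has finite expectation and is therefore finite almost surely; combined with the lower bound above, this gives $\sum_n Z_j^n(1-Z_j^n) < \infty$ almost surely. Since $Z_j^n(1-Z_j^n) \to Z_j^*(1-Z_j^*)$ by the first step, a convergent sum of nonnegative terms whose general term has a limit forces that limit to be zero, yielding $Z_j^*(1-Z_j^*) = 0$ a.s.\ and hence $Z_j^* \in \{0,1\}$ a.s. The main obstacle is conceptual rather than computational: one must recognize that boundedness of $W_{\mbox{\textit{tot}}}^n$ is precisely what prevents the martingale increments from shrinking so quickly that $Z_j^n$ could stabilize at an interior value of $[0,1]$; everything else is routine $L^2$-martingale bookkeeping.
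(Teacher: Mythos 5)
Your proposal is correct, and its skeleton matches the paper's: both rest on Lemma \ref{martininf} plus boundedness for the a.s.\ convergence, and both hinge on the same conditional variance identity $\Var(Z_{j}^{n+1}\mid\mathcal{F}_{n})=Z_{j}^{n}(1-Z_{j}^{n})(W_{\mbox{\textit{tot}}}^{n}\beta+1)^{-2}$ bounded below via Lemma \ref{lem:weightbound}. Where you genuinely diverge is in how you conclude that $Z_{j}^{n}(1-Z_{j}^{n})\to 0$ almost surely. The paper argues: a.s.\ convergence of the martingale forces $Z_{j}^{n+1}-Z_{j}^{n}\to 0$ a.s., dominated convergence then gives $\expect|Z_{j}^{n+1}-Z_{j}^{n}|^{2}\to 0$, hence $\expect[Z_{j}^{n}(1-Z_{j}^{n})]\to 0$, and finally $L_{1}$ convergence yields an a.s.\ convergent subsequence along which $Z_{j}^{n}(1-Z_{j}^{n})\to 0$, which suffices since the full sequence already converges a.s. You instead use orthogonality of martingale increments and $L^{2}$-boundedness to get $\sum_{n}\expect|Z_{j}^{n+1}-Z_{j}^{n}|^{2}\le 1$, pass via Tonelli to a.s.\ finiteness of $\sum_{n}Z_{j}^{n}(1-Z_{j}^{n})$, and read off that the general term tends to $0$ a.s.\ directly. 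Both arguments are sound; yours avoids the dominated convergence and subsequence-extraction steps, gives the (slightly stronger) a.s.\ summability of $Z_{j}^{n}(1-Z_{j}^{n})$ as a byproduct, and uses the martingale convergence only at the very end to identify the limit, while the paper's route leans on the a.s.\ convergence earlier and needs only $L^{1}$-level bookkeeping. Your closing remark also correctly isolates the role of Lemma \ref{lem:weightbound}: without a uniform bound on $W_{\mbox{\textit{tot}}}^{n}$ the variance lower bound would degenerate and the dichotomy argument would not close.
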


\begin{proof}

To prove this theorem, we will require an expression for $\Var (Z_{j}^{n+1}|\mathcal{F}_{n})$, where $\mathcal{F}_{n}=\sigma (w_{j}^{m},~m\leq n,~j\in\{1,\hdots,k\})$, as in Lemma \ref{martininf}.  First we determine $\expect((Z_{j}^{n+1})^{2}|\mathcal{F}_{n})$,
\begin{align*}
\expect((Z_{j}^{n+1})^{2}|\mathcal{F}_{n})&=\dfrac{W_{j}^{n}}{W_{\mbox{\textit{tot}}}^{n}}\left(\dfrac{W_{j}^{n} \beta  +1 }{W_{\mbox{\textit{tot}}}^{n} \beta  +1 }\right)^{2} +\dfrac{W_{\mbox{\textit{tot}}}^{n}-W_{j}^{n}}{W_{\mbox{\textit{tot}}}^{n}}\left(\dfrac{W_{j}^{n} \beta   }{W_{\mbox{\textit{tot}}}^{n} \beta  +1 }\right)^{2} \\
&=\dfrac{(W_{j}^{n})^{2}W_{\mbox{\textit{tot}}}^{n}\beta^{2}+2(W_{j}^{n})^{2} \beta + W_{j}^{n} }{ W_{\mbox{\textit{tot}}}^{n} \left(W_{\mbox{\textit{tot}}}^{n} \beta  +1\right)^{2}  }.
\end{align*}
This allows us to calculate the conditional variance,
\begin{align*}
\Var(Z_{j}^{n+1}|\mathcal{F}_{n})&:=\expect((Z_{j}^{n+1})^{2}|\mathcal{F}_{n})-\expect(Z_{j}^{n+1}|\mathcal{F}_{n})^{2} \\
&=\dfrac{(W_{j}^{n})^{2}W_{\mbox{\textit{tot}}}^{n}\beta^{2}+2(W_{j}^{n})^{2} \beta + W_{j}^{n} }{ W_{\mbox{\textit{tot}}}^{n} \left(W_{\mbox{\textit{tot}}}^{n} \beta  +1\right)^{2}  }-
\left(\dfrac{W_{j}^{n}}{ W_{\mbox{\textit{tot}}}^{n} }\right)^{2} \\
&=W_{j}^{n}(W_{\mbox{\textit{tot}}}^{n}-W_{j}^{n})   (W_{\mbox{\textit{tot}}}^{n})^{-2}  \left(W_{\mbox{\textit{tot}}}^{n} \beta  +1\right)^{-2} \\
&=Z_{j}^{n}(1-Z_{j}^{n}) \left(W_{\mbox{\textit{tot}}}^{n} \beta  +1 \right)^{-2}. \numberthis \label{variance}
\end{align*}

By the Martingale Convergence Theorem, because $Z_{j}^{n}$ is a submartingale and $\sup_{n} \expect |Z_{j}^{n}|\leq 1$, we know there is a random variable $Z_{j}^{*}$, such that $Z_{j}^{n}\rightarrow Z_{j}^{*}$ a.s.  This implies $Z_{j}^{n+1}-Z_{j}^{n}\rightarrow 0$ a.s., and we know that $|Z_{j}^{n+1}-Z_{j}^{n}|\leq 2$ for all $n$.  By the Dominated Convergence Theorem \cite{rosenthal}, this implies that $\expect(|Z_{j}^{n+1}-Z_{j}^{n}|^{2})\rightarrow 0$, as $n\rightarrow \infty$.

By Lemma \ref{lem:weightbound}, $W_{tot}^{n}\leq \gamma$, for all $n$.  Because $Z_{j}^{n}$ is $\mathcal{F}_{n}$-measurable, and $\expect(Z_{j}^{n+1}|\mathcal{F}_{n})=Z_{j}^{n}$,
\begin{align*}
\expect(|Z_{j}^{n+1}-Z_{j}^{n}|^{2})&=\expect((Z_{j}^{n+1})^{2}-2Z_{j}^{n+1}Z_{j}^{n}+(Z_{j}^{n})^{2}) \\
&=\expect \left[ \expect((Z_{j}^{n+1})^{2}-2Z_{j}^{n+1}Z_{j}^{n}+(Z_{j}^{n})^{2} | \mathcal{F}_{n} ) \right] \\
%&=\expect \sqbrac{ \expect((Z_{j}^{n+1})^{2}|Z_{j}^{n})-2\expect (Z_{j}^{n+1}|Z_{j}^{n})Z_{j}^{n}+(Z_{j}^{n})^{2}} \\
%&=\expect \sqbrac{ \expect((Z_{j}^{n+1})^{2}|Z_{j}^{n})-(Z_{j}^{n})^{2}} \\
&=\expect \left[ \Var(Z_{j}^{n+1}|\mathcal{F}_{n})\right]. \numberthis \label{vareq}
\end{align*}
Using Equations \ref{variance} and \ref{vareq}, as well as Lemma \ref{lem:weightbound}, we get the following
\begin{align*}
\expect(|Z_{j}^{n+1}-Z_{j}^{n}|^{2})%&=\expect((Z_{j}^{n+1})^{2}-2Z_{j}^{n+1}Z_{j}^{n}+(Z_{j}^{n})^{2}) \\
%&=\expect \sqbrac{ \expect((Z_{j}^{n+1})^{2}-2Z_{j}^{n+1}Z_{j}^{n}+(Z_{j}^{n})^{2} | \mathcal{F}_{n} ) } \\
%&=\expect \sqbrac{ \expect((Z_{j}^{n+1})^{2}|Z_{j}^{n})-2\expect (Z_{j}^{n+1}|Z_{j}^{n})Z_{j}^{n}+(Z_{j}^{n})^{2}} \\
%&=\expect \sqbrac{ \expect((Z_{j}^{n+1})^{2}|Z_{j}^{n})-(Z_{j}^{n})^{2}} \\
%&=\expect \sqbrac{ \Var(Z_{j}^{n+1}|\mathcal{F}_{n})} \\
&=\expect \left[  Z_{j}^{n}(1-Z_{j}^{n}) \left(W_{\mbox{\textit{tot}}}^{n} \beta  +1 \right)^{-2}  \right] \\
&\geq (\gamma \beta +1)^{-2} \expect \left[  Z_{j}^{n}(1-Z_{j}^{n}) \right].
\end{align*}
Taking the limit as $n\rightarrow \infty$ on both sides, we obtain $\expect \left[  Z_{j}^{n}(1-Z_{j}^{n}) \right]\rightarrow 0$, as $n\rightarrow \infty$.  Because convergence in $L_{1}$ implies convergence in probability \cite[p.85]{romano}, we know $\pr (Z_{j}^{n}(1-Z_{j}^{n})<\eps)\rightarrow 1$, for all $\eps>0$.  This implies there exists a subsequence such that $Z_{j}^{n_{i}}(1-Z_{j}^{n_{i}})\rightarrow 0$ a.s. \cite[p.7]{bhattbook}.  As such $Z_{j}^{*}$ can only equal $0$ or $1$, since we know there must exist a $Z_{j}^{*}$ such that $Z_{j}^{n}\rightarrow Z_{j}^{*}$, a.s. \end{proof}

Which brings us to our final result.

\begin{thm}
When $N=\infty$, in the model described in Section \ref{sec:general}, all categories but one will become extinct with a probability of $1$.
\end{thm}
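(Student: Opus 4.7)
The plan is to bootstrap from Theorem \ref{infcollapse}, which controls the ratios $Z_j^n = W_j^n/W_{\mbox{\textit{tot}}}^n$, to the raw weights $W_j^n$ that actually govern extinction.

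First, I would identify which category survives. Since $\sum_{j=1}^{k} Z_j^n = 1$ for every $n$ by construction, and Theorem \ref{infcollapse} guarantees each $Z_j^n$ converges almost surely to a limit $Z_j^* \in \{0,1\}$, intersecting the $k$ probability-one convergence events yields $\sum_{j=1}^{k} Z_j^* = 1$ with each $Z_j^* \in \{0,1\}$, almost surely. Hence exactly one index $J \in \{1,\dots,k\}$ satisfies $Z_J^* = 1$, and $Z_j^* = 0$ for every $j \neq J$, a.s.

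Second, I would translate $Z_j^n \to 0$ into $W_j^n \to 0$. The deterministic recursion $W_{\mbox{\textit{tot}}}^{n+1} = \beta W_{\mbox{\textit{tot}}}^{n} + 1$ gives $W_{\mbox{\textit{tot}}}^{n} \geq 1$ for all $n \geq 1$, while Lemma \ref{lem:weightbound} provides the uniform upper bound $W_{\mbox{\textit{tot}}}^{n} \leq \gamma$. Writing $W_j^n = Z_j^n\, W_{\mbox{\textit{tot}}}^n \leq \gamma\, Z_j^n$, the limit $Z_j^* = 0$ immediately forces $W_j^n \to 0$ almost surely. Since every individual exemplar weight satisfies $0 \leq w_{j,m}^n \leq W_j^n$, this is precisely extinction of category $j$ in the sense defined in the Introduction, and it holds simultaneously for the $k-1$ categories with $Z_j^* = 0$.

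The main obstacle is really just bookkeeping: one must combine the $k$ coordinate-wise almost-sure convergence statements from Theorem \ref{infcollapse} into a single probability-one event on which the constraint $\sum_j Z_j^n = 1$ can be passed to the limit, in order to conclude that \emph{exactly} one (rather than zero) of the limits $Z_j^*$ equals $1$. This is routine once one takes a finite intersection of full-probability events, but it is the conceptual linchpin of the argument; everything else reduces to the deterministic upper and lower bounds on $W_{\mbox{\textit{tot}}}^n$.
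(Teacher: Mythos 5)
Your proposal is correct and takes essentially the same route as the paper: both deduce from Theorem \ref{infcollapse} that exactly one $Z_j^*$ equals $1$ (via $\sum_j Z_j^* = 1$) and then use the bound $W_j^n \leq \gamma Z_j^n$ from Lemma \ref{lem:weightbound} to convert $Z_j^n \to 0$ into $W_j^n \to 0$. Your explicit handling of the intersection of the $k$ almost-sure events is slightly more careful bookkeeping than the paper's, but the argument is the same.
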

\begin{proof}
We know by Lemma~\ref{lem:weightbound} that ${Z_{j}^{n}=W_{j}^{n} / W_{\mbox{\textit{tot}}}^{n} \geq W_{j}^{n} \gamma^{-1} \geq 0}$, implying if $Z_{j}^{n}\rightarrow 0$, then $W_{j}^{n}\rightarrow 0$ as well.  By Theorem \ref{infcollapse}, for every category ${j\in \{1\hdots k \}}$, $Z_{j}^{n}\rightarrow Z_{j}^{*}$, a.s., where $Z_{j}^{*}$ can only be $0$ or $1$, and we know $\sum_{j} Z_{j}^{*}=1$.  As such $Z_{j}^{n}\rightarrow 0$, a.s, for every category $j$, but one.  This implies all but one category will become extinct with a probability of $1$.
\end{proof}

\section{Simulations and Time to Extinction}
\label{simstoptime}

In the last two sections, we proved extinction of all but one category occurs for our model regardless of the value of $N$.  In this section we will discuss some of the results obtained by computer simulations of the simplified weight model.  These simulations will demonstrate how changing the variables $N$ and $\lambda$ affects how long it takes until there is only one non-extinct category left in the system.

\begin{figure}[h!]
\includegraphics[width=14cm]{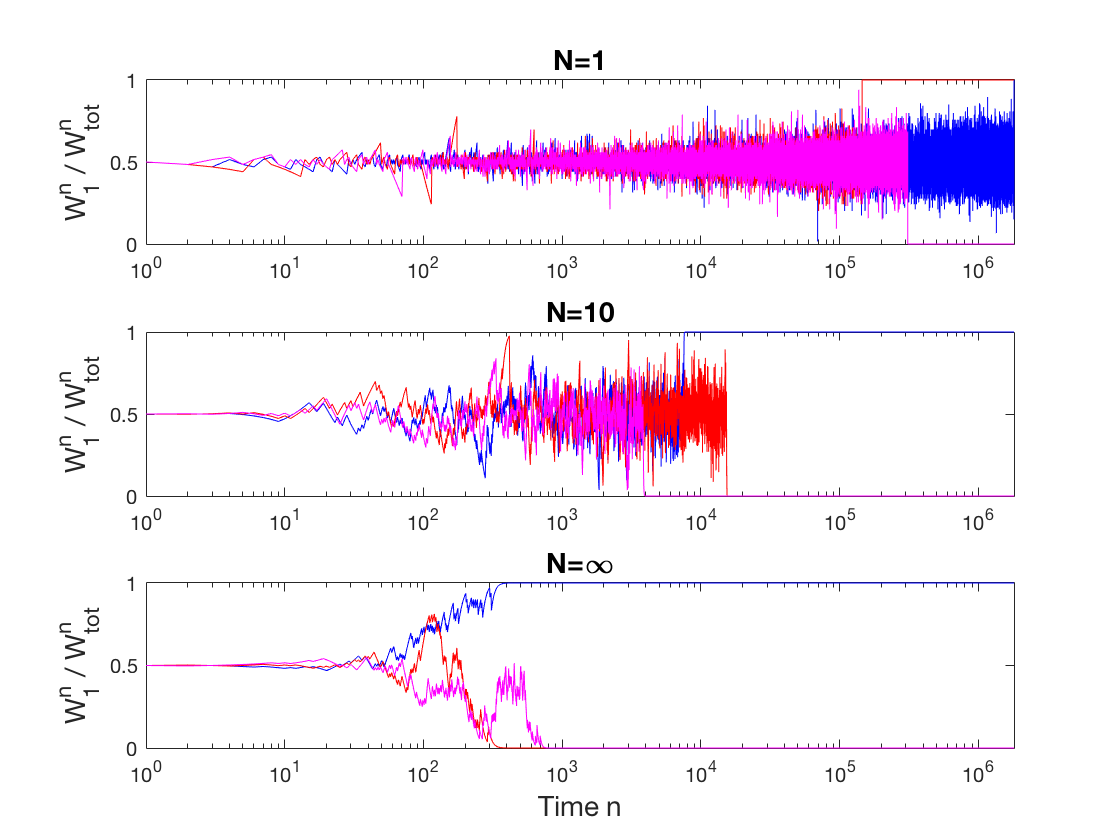}
%\makebox[\textwidth][c]{\includegraphics[width=1.4\textwidth]{weightsim.png}}%
	\caption{Plots of $Z_{1}^{n}=W_{1}^{n}/W_{\mbox{\textit{tot}}}^{n}$ for single simulations when $k=2$, $\lambda=0.06$, and the weight threshold is $10^{-4}W_{0}$.  For each value of $N$ we have plotted $Z_{1}^{n}$ against time step $n$ for three simulations.}
	\label{fig:wsim}
\end{figure}

\begin{figure}[h!]
\includegraphics[width=14cm]{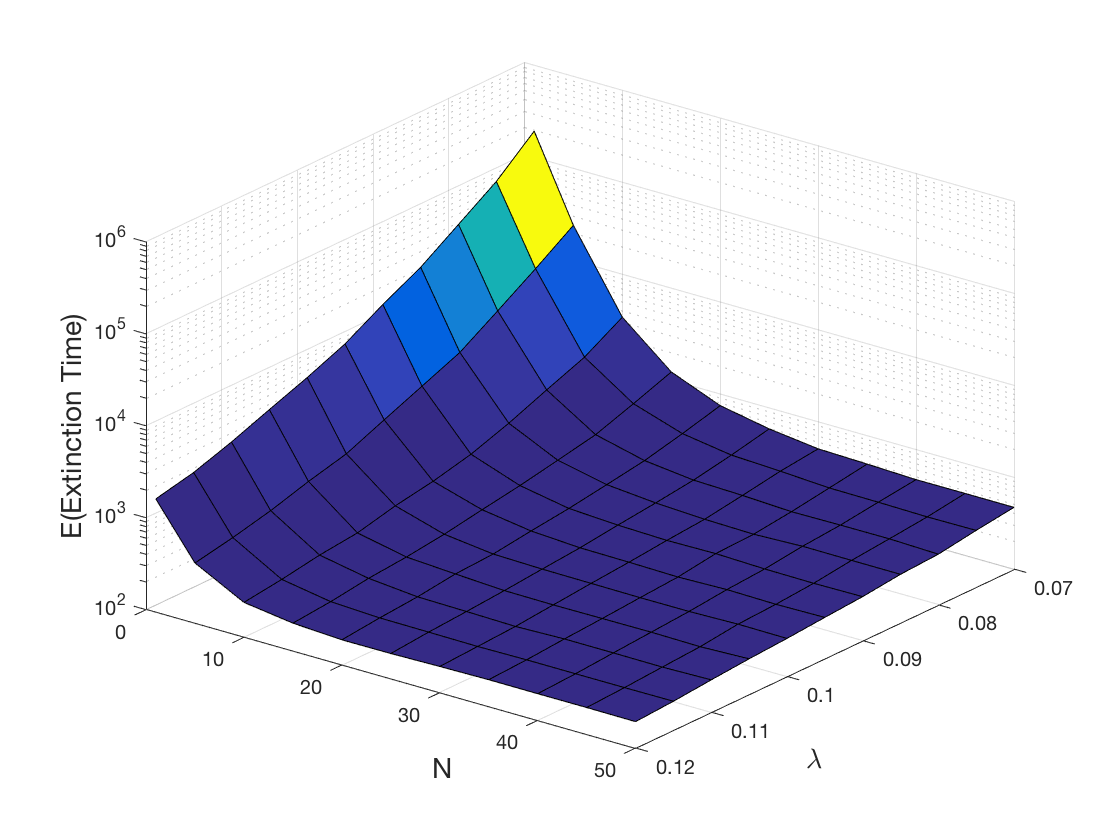}
%\makebox[\textwidth][c]{\includegraphics[width=1.1\textwidth]{stoptime2.png}}%
	\caption{Plotting the expected extinction time as we change variables $N$ and $\lambda$.  We use a weight threshold equal to $10^{-4}W_{0}$.}
	\label{fig:stoptime}
\end{figure}

Before discussing the results of our computer simulations, we will explain weight thresholds.  Analytically a category~$j$ becomes extinct when $W_{j}^{n}\rightarrow 0$, as $n\rightarrow \infty$.  Extinction of all but one category means there exists a category~$j$ such that $W_{i}^{n}\rightarrow 0$, as $n\rightarrow \infty$, for all $i\neq j$.  When running computer simulations, we cannot possibly know for certain if a category's weight approaches zero, but we do something else to detect if it is most likely going to.  In simulations, once a category's weight goes below a value we call a weight threshold, we assume that the category becomes extinct.  The time it takes for all but one of the category's weights to go below the weight threshold will be referred to as the extinction time.  For Figures \ref{fig:wsim} and \ref{fig:stoptime}, the number of categories $k=2$, so the extinction time is how soon one of the two categories goes extinct.

%we set the weight of all exemplars in the category to $0$, representing that category's extinction.  The category is likely to become extinct if its weight goes below the threshold value.

%In this section we investigate the time it takes for the categories to become extinct in numerical simulations, which we call the extinction time.  The extinction time of the system is the amount of time it takes before all but one of the categories become extinct in simulations; so all but one of the category weights go below the weight threshold.

Figure \ref{fig:wsim} plots three simulations each for three separate values of $N$, where the number of categories $k=2$.  We show the evolution of the random variable $Z_{1}^{n}$ (defined in Section \ref{infstored}) for the values $N=1,~10$ and $\infty$.  When $Z_{1}^{n}$ hits either $0$ or $1$ the simulation ends, representing that either category $1$ or $2$ has become extinct respectively.  Upon inspection we see that the larger $N$ is, the faster categories become extinct.

Figure \ref{fig:stoptime} plots how the expected extinction time changes based on the values of our decay rate $\lambda$, and the limitation on the number of exemplars $N$, when the number of categories $k=2$.  The expected value for the extinction time is found by averaging over $1000$ simulations for each value of $N$ and $\lambda$.  As $N$ decreases, we observe as we did for Figure \ref{fig:wsim} that the extinction time increases.  Likewise as $\lambda$ decreases, the extinction time increases as well.

It is straightforward to explain how $\lambda$ affects the extinction time, but the explanation for the effect $N$ has is more subtle.  To help understand the effect $N$ has on the extinction time, we will consider two examples.  For both examples, let $ \beta =0.5$, $k=2$ (two categories), and the initial weight of the first two exemplars in each list is $W_{0}=1$, while the rest of the exemplar weights are zero.

\begin{enumerate}
\item First consider the case where $N=2$.  If $x_{0}=2$, the weights of category $2$ will be $w_{2,1}^{2}=1$, and $w_{2,2}^{2}=0.5$, and the weights of category $1$ will be $w_{1,1}^{2}=w_{1,2}^{2}=\beta = 0.5$.  This implies the probability that $x_{1}=2$, given that $x_{0}=2$, is $60\%$.

\item Now consider the case where $N=\infty$.  If $x_{0}=2$, then the total weight of categories $1$ and $2$ respectively will be $W_{1}^{n}=2\beta=1$, and $W_{2}^{n}=2\beta+1=2$.  This implies the probability that $x_{1}=2$, given that $x_{0}=2$, is approximately $66.7\%$.

%\item First consider the case where $N=2$.  Lets say that by chance, category $2$ gets classified ten times in a row.  If this occurs, the weights of category $2$ will be $w_{2,1}^{10}=1$, and $w_{2,2}^{10}=0.999$, and the weights of category $1$ will be $w_{1,1}^{10}=w_{1,2}^{10}=\beta^{10}\cong 0.990$.  The probability of classifying category $2$ ten times in a row from the beginning is approximately $0.1\%$.  If the event $\{ x_{n}=2,\mbox{ for all }n\leq 9 \}$ has already occured, the probability of categorizing $x_{10}=2$, is approximately $50.25\%$.

%\item Now consider the case where $N=\infty$.  One can show that if category $2$ gets classified $n$ times in a row from the initial iteration, then the total weight of categories $1$ and $2$ respectively will be $W_{1}^{n}=2\beta^{n} 1.98$, and $W_{2}^{n}=2\beta^{n}+\sum_{q=0}^{n-1}\beta^{q}$.  If category $2$ gets classified ten times in a row, then $W_{1}^{10}\cong 1.98$, and $W_{2}^{10}\cong 11.935$.  One can show after some algebra that the probability of classifying category $2$ ten times in a row is
%\beq
%\bold{P}(x_{n}=2,\mbox{ for all }n\leq 9|W_{1}^{0}=2,~W_{2}^{0}=2)=\prod_{n=0}^{9}\dfrac{2\beta^{n+1}-\beta^{n}-1}{4\beta^{n+1}-3\beta^{n}-1}\cong 0.04 ,
%\eeq
%which is approximately $4\%$.  We also find that the probability of categorizing $x_{10}=2$, given that ${ \{ x_{n}=2,\mbox{ for all }n\leq 9 \} }$, is approximately $86\%$.
\end{enumerate}

%It is more probable for category $2$ to be consecutively categorized when $N=\infty$.
It is more probable when $N=\infty$, for a category to be consecutively categorized.  When $N=2$, it is rarer for the exemplar weights to decay close to zero than when $N=\infty$.  This demonstrates why limiting the number of exemplars makes extinction take longer.  When $N=\infty$, exemplars getting stored in a category consecutively adds comparatively more weight to the category.  This explains the effect of $N$ on the extinction time, as seen in Figures \ref{fig:wsim} and \ref{fig:stoptime}.

%By Theorem \ref{collapsethm} we know that categories always become extinct even for $N=1$.

The behaviour of  the expected extinction time increasing as $\lambda$ decreases is much easier to explain.  The weights are decaying slower, so it will take longer for the weights to approach zero.  If $\lambda =0$, then there would be no decay and thus no category extinction.  Because of this, as $\lambda \rightarrow 0$, the expected extinction time will asymptotically approach infinity.

\section{Discussion}

The model studied in this paper is simpler than the ones studied by Tupper \cite{tupper2015} and Wedel \cite{wedel2012}, but it helps explain the behaviour we see in these models.  Changing $N$ in our model doesn't affect whether all categories but one eventually become extinct, but it does affect the time it takes to do so.  Our results agree with the extinction result demonstrated in \cite{tupper2015} for $N=\infty$.  However, our work suggests that the model studied in \cite{wedel2012} will eventually show the same behaviour but on a longer time scale.  This longer time scale likely explains why category extinction was not observed in Wedel's simulation \cite{wedel2012}.%  It is possible that extinction is not observable for reasonable simulation durations for the model studied in \cite{wedel2012}.

One natural direction we can take in future research is to apply our model to real world data.    For example, in Section \ref{sec:introduction}, Figure \ref{fig:cider} shows the evolution of the usage of two spellings of the word cider over $200$ years \cite{ngramviewer}.  The archaic spelling ``cyder'' becomes extinct close to the year $1980$.  Using the corpus of digitized texts put together in \cite{ngramviewer}, one could determine what values of $N$ and $\lambda$ best models this type of data.

%The model studied here has no lexical information stored in exemplars.  When a sound is produced in the model there is no distinction behind what sound it is supposed to be.

%The model studied in this paper can be compared with an exemplar dynamic models that use a more complicated version of the Luce choice rule to categorize sounds (like in \cite{tupper2015} and \cite{wedel2012}).  These are models where the weights multiply the numerator of the probability of labelling a new sound as a category.  These kinds of models likely lead to all but one category becoming extinct.  Researchers are aware of this phenomenon and have come up with ways to keep it from happening.  For example, in \cite{tupper2015}, it is argued that the discarding of mis-classified sounds is necessary for stable phonetic distinctions.  A topic of future research could be to study a simple model like the one studied in this paper to prove which mechanisms work in keeping the categories from becoming extinct.

%There are other exemplar dynamic models which have been proven or argued to be stable.  In future work one might 

%The next step in research on this topic can be to investigate $N$'s effect when using more complicated exemplar dynamic models.

{\bf Acknowledgments.} This work was supported by an NSERC Discovery Grant, an NSERC Discovery Accelerator Supplement, and a Tier 2 Canada Research Chair.

%\begin{acknowledgements}
%If you'd like to thank anyone, place your comments here
%and remove the percent signs.
%\end{acknowledgements}

% BibTeX users please use one of
%\bibliographystyle{spbasic}      % basic style, author-year citations
\bibliographystyle{apt}      % mathematics and physical sciences
%\bibliographystyle{spphys}       % APS-like style for physics
%\bibliography{bensbib}   % name your BibTeX data base

\begin{thebibliography}{10}
\providecommand{\url}[1]{{#1}}
\providecommand{\urlprefix}{URL }
\expandafter\ifx\csname urlstyle\endcsname\relax
  \providecommand{\doi}[1]{DOI~\discretionary{}{}{}#1}\else
  \providecommand{\doi}{DOI~\discretionary{}{}{}\begingroup
  \urlstyle{rm}\Url}\fi



\bibitem{bhattbook}
Bhattacharya, R., Waymire, E.C.: A basic course in probability theory.
\newblock Springer Science \& Business Media (2007)

%\bibitem{billingsley1986probability}
%Billingsley, P.: Probability and Measure.
%\newblock Wiley series in Probability and Mathematical Statistics (1986)


\bibitem{billingsley1995}
Billingsley, P.: Probability and Measure.
\newblock Wiley Series in Probability and Statistics (1995)

%\bibitem{bjork2009arbitrage}
%Bj{\"o}rk, T.: Arbitrage Theory in Continuous Time,
%\newblock Oxford University Press (2009)

\bibitem{bybee2002}
Bybee, J.: Phonological evidence for exemplar storage of multiword sequences.
\newblock Studies in Second Language Acquisition \textbf{24}(02), 215--221
  (2002)

\bibitem{jaeger1}
J{\"a}ger, G.: Applications of game theory in linguistics.
\newblock Language and Linguistics Compass \textbf{2}(3), 406--421 (2008)

\bibitem{johnson1997}
Johnson, K.: Speech perception without speaker normalization: An exemplar
  model.
\newblock Talker variability in speech processing pp. 145--165 (1997)

\bibitem{luce}
Luce, R.D.: Individual choice behavior: A theoretical analysis.
\newblock Courier Corporation (2005)

\bibitem{ngramviewer}
Michel, J.B., Shen, Y.K., Aiden, A.P., Veres, A., Gray, M.K., Pickett, J.P.,
  Hoiberg, D., Clancy, D., Norvig, P., Orwant, J., et~al.: Quantitative
  analysis of culture using millions of digitized books.
\newblock science \textbf{331}(6014), 176--182 (2011)

\bibitem{nosofsky1986}
Nosofsky, R.M.: Attention, similarity, and the identification--categorization
  relationship.
\newblock Journal of experimental psychology: General \textbf{115}(1), 39
  (1986)

\bibitem{nosofsky1988}
Nosofsky, R.M.: Similarity, frequency, and category representations.
\newblock Journal of Experimental Psychology: Learning, Memory, and Cognition
  \textbf{14}(1), 54 (1988)

\bibitem{pierrehumbert}
Pierrehumbert, J.B.: Exemplar dynamics: word frequency, lenition, and contrast.
\newblock Frequency and the emergence of linguistic structure \textbf{45},
  137--157 (2001)

\bibitem{romano}
Romano, J.P., Siegel, A.F.: Counterexamples in probability and statistics.
\newblock CRC Press (1986)

\bibitem{rosenthal}
Rosenthal, J.S.: A first look at rigorous probability theory, vol.~2.
\newblock World Scientific Singapore (2000)

\bibitem{rudinanalysis}
Rudin, W.: Real and Complex Analysis, 3rd Ed.
\newblock McGraw-Hill, Inc., New York, NY, USA (1987)

\bibitem{steel2015}
Steel, M.: Reflections on the extinction--explosion dichotomy.
\newblock Theoretical population biology \textbf{101}, 61--66 (2015)

\bibitem{tupper2015}
Tupper, P.F.: Exemplar dynamics and sound merger in language.
\newblock SIAM Journal on Applied Mathematics \textbf{75}(4), 1469--1492 (2015)

\bibitem{wedel2012}
Wedel, A.: Lexical contrast maintenance and the organization of sublexical
  contrast systems.
\newblock Language and Cognition \textbf{4}, 319--355 (2012)

\bibitem{winter2016}
Winter, B., Wedel, A.: The co-evolution of speech and the lexicon: The
  interaction of functional pressures, redundancy, and category variation.
\newblock Topics in cognitive science \textbf{8}(2), 503--513 (2016)

\end{thebibliography}

\def\cprime{$'$} \def\cprime{$'$}

% Non-BibTeX users please use
%\begin{thebibliography}{}
%
% and use \bibitem to create references. Consult the Instructions
% for authors for reference list style.
%
%\bibitem{RefJ}
% Format for Journal Reference
%Author, Article title, Journal, Volume, page numbers (year)
% Format for books
%\bibitem{RefB}
%Author, Book title, page numbers. Publisher, place (year)
% etc
%\end{thebibliography}

\end{document}